\algnewcommand\algorithmicforeach{\textbf{for each}}
\algnewcommand\AndAlg{\textbf{and}}
\newcommand{\maps}{\colon}
\def\endthebibliography{%
  \def\@noitemerr{\@latex@warning{Empty `thebibliography' environment}}%
  \endlist
}
\newcommand*\circled[1]{\tikz[baseline=(char.base)]{
            \node[shape=circle,draw,inner sep=2pt] (char) {#1};}}
\newcommand*\groundedcircled[1]{\tikz[baseline=(char.base)]{
            \node[shape=circle,draw,inner sep=2pt, fill=gray!40] (char) {#1};}}
\definecolor{purple(x11)}{rgb}{0.8, 0, 0.8}
\newcommand*\pgfdeclareanchoralias[3]{%
  \expandafter\def\csname pgf@anchor@#1@#3\expandafter\endcsname
     \expandafter{\csname pgf@anchor@#1@#2\endcsname}}
\tikzset{
    circnode/.style={
      circle, draw=red, very thin, outer sep=0.025em, minimum size=2em,
      fill=red, text centered},
    integral/.style={
      circle, draw=black, very thick, outer sep=0.025em,
      minimum size=2em, fill=blue!5, text centered},
    multiply/.style={
      circle, draw=black, very thick, outer sep=0.025em,
      minimum size=2em, fill=blue!5, text centered},
    zero/.style={
      circle, draw=black, very thick, minimum size=0.15cm, fill=black,
      inner sep=0, outer sep=0},
    bang/.style={
      circle, draw=black, very thick, minimum size=0.15cm, fill=green!10,
      inner sep=0, outer sep=0},
    delta/.style={
      regular polygon, regular polygon sides=3, minimum size=0.4cm, inner
      sep=0, outer sep=0.025em, draw=black, very thick, fill=green!10},
    codelta/.style={
      regular polygon, regular polygon sides=3, shape border rotate=180, minimum size=0.4cm,
      inner sep=0, outer sep=0.025em, draw=black, very thick, fill=green!10},
    plus/.style={
      regular polygon, regular polygon sides=3, shape border rotate=180, minimum size=0.4cm,
      inner sep = 0, outer sep=0.025em, draw=black, very thick, fill=black},
    coplus/.style={
      regular polygon, regular polygon sides=3, minimum size=0.4cm,
      inner sep = 0, outer sep=0.025em, draw=black, very thick, fill=black},
    rect/.style={
      rectangle, minimum height=1cm, minimum width=.5cm,
      draw=black, inner sep=0.2em, outer sep=0.025em, text centered},
    bigcirc/.style={
      circle, draw=black, very thick, text width=1.6em, outer sep=0.025em,
      minimum height=1.6em, fill=blue!5, text centered}
}
\tikzstyle{tri}=[regular polygon,regular polygon sides=3,shape border rotate=1
\definecolor {processblue}{cmyk}{0.9,0.5,0,0}
\tikzstyle{simple}=[-,line width=2.000]
\tikzstyle{arrow}=[-,postaction={decorate},decoration={markings,mark=at position .5 with {\arrow{>}}},line width=1.100]
\tikzstyle{none}=[inner sep=-1pt]
\tikzstyle{species}=[circle,fill=none,draw=black,scale=0.75]
\tikzstyle{transition}=[rectangle,fill=none,draw=black,scale=1.15]
\tikzstyle{empty}=[circle,fill=none, draw=none]
\tikzstyle{inputdot}=[circle,fill=black,draw=black, scale=.5]
\tikzstyle{dot}=[circle,fill=black,draw=black]
\tikzstyle{bounding}=[circle,dashed, fill=none,draw=black, scale=9.00]
\tikzstyle{triplebounding}=[circle,dashed, fill=none,draw=black, scale=30.00]
\tikzstyle{simple}=[-,draw=black,line width=1.000]
\tikzstyle{inarrow}=[-,draw=black,postaction={decorate},decoration={markings,mark=at position .5 with {\arrow{>}}},line width=1.000]
\tikzstyle{tick}=[-,draw=black,postaction={decorate},decoration={markings,mark=at position .5 with {\draw (0,-0.1) -- (0,0.1);}},line width=1.000]
\tikzstyle{inputarrow}=[->,draw=black, shorten >=.05cm]
\tikzstyle{tri}=[regular polygon,regular polygon sides=3,shape border rotate=1
\tikzstyle{simple}=[-,line width=2.000]
\tikzstyle{arrow}=[-,postaction={decorate},decoration={markings,mark=at position .5 with {\arrow{>}}},line width=1.100]
\tikzstyle{none}=[inner sep=-1pt]
\definecolor{lblue}{rgb}{0,250,255}
\tikzstyle{species}=[circle,fill=yellow,draw=black,scale=1]
\tikzstyle{transition}=[rectangle,fill=lblue,draw=black,scale=1]
\tikzstyle{morphism}=[rectangle,fill=pink,draw=black,scale=1]
\tikzstyle{place}=[circle,thick,draw=blue!75,fill=blue!20,minimum size=6mm]
\tikzstyle{red place}=[place,draw=red!75,fill=red!20]
\tikzstyle{transition}=[rectangle,thick,draw=black!75,
\begin{document}
\title{String Diagrams for Assembly Planning\thanks{All authors contributed equally to this work while at Siemens}}
%
%\titlerunning{Abbreviated paper title}
% If the paper title is too long for the running head, you can set
% an abbreviated paper title here
%
\author{Jade Master \inst{1}\orcidID{0000-0003-1970-6030} \and
Evan Patterson\inst{2}\orcidID{0000-0002-8600-949X} \and
Shahin Yousfi\inst{3} \and Arquimedes Canedo\inst{3}\orcidID{0000-0003-3506-6563} }
\authorrunning{J. Master, E. Patterson et al.}
% First names are abbreviated in the running head.
% If there are more than two authors, 'et al.' is used.
%
\institute{University of California Riverside, Riverside CA 92507,  \email{jmast003@ucr.edu} \and
Stanford University, 450 Serra Mall, Stanford, CA 94305, \email{evan@epatters.org}
 \and Siemens Corporate Technology, 755 College Rd E, Princeton, NJ 08540 \email{arquimedes.canedo@siemens.com}}

\maketitle              % typeset the header of the contribution
\begin{abstract}
Assembly planning is a difficult problem for companies. Many disciplines such as design, planning, scheduling, and manufacturing execution need to be carefully engineered and coordinated to create successful product assembly plans. Recent research in the field of \textit{design for assembly} has proposed new methodologies to design product structures in such a way that their assembly is easier. However, present assembly planning approaches lack the engineering tool support to capture all the constraints associated to assembly planning in a unified manner. This paper proposes \textsc{CompositionalPlanning}, a string diagram based framework for assembly planning. In the proposed framework, string diagrams and their compositional properties serve as the foundation for an engineering tool where CAD designs interact with planning and scheduling algorithms to automatically create high-quality assembly plans. These assembly plans are then executed in simulation to measure their performance and to visualize their key build characteristics. We demonstrate the versatility of this approach in the LEGO assembly domain. We developed two reference LEGO CAD models that are processed by \textsc{CompositionalPlanning}'s algorithmic pipeline. We compare sequential and parallel assembly plans in a Minecraft simulation and show that the time-to-build performance can be optimized by our algorithms.

\keywords{String Diagrams  \and Assembly Planning \and Category Theory}
\end{abstract}
\section{Introduction}

Today, mass customization of products such as automobiles and consumer electronics is forcing companies to provide a very large product variety to address the diverse customer requirements. Digital manufacturing technologies make it possible to accommodate mass customization during product design and manufacturing. For example, parametric designs in computer aided design (CAD) software allow for the specification of configurable products,  and computer aided manufacturing (CAM) algorithms allow for the fabrication of products on different machines. Unfortunately, there is very limited engineering tool support for product assembly planning. Although some companies use \textit{design for assembly} (DfA)~\cite{Kretschmer} and \textit{design for manufacturing and assembly} (DfMA)~\cite{FAVI} methodologies that attempt to develop product structures that facilitate their assembly, their implementation is ad-hoc. Therefore, assembly planning is a task that is loosely coupled to the rest of the digital manufacturing pipeline. The objective of this paper is to open new avenues for \textit{interoperable} assembly planning that is tightly coupled to the upstream design activities, and the downstream assembly tasks.  

String diagrams are a powerful graphical calculus for reasoning in category theory \cite{selinger}. String diagrams have also proven useful in many other domains. They have been shown to provide a mathematically sound graphical language in domains including linguistics \cite{coecke}, systems engineering \cite{baez}, and computer science \cite{master}. Generally, string diagrams represent processes which require and produce resources. Assembly planning is the discipline of understanding how to optimally chain assembly processes together to craft a whole product from separate parts~\cite{Ghandi}. Thus, string diagrams are a natural tool for formulating assembly planning problems and constructing their solutions.

To demonstrate this thesis we show that string diagrams can be used to build construction schedules for various LEGO models. From each LEGO 3D CAD file we generate a \textit{connectivity graph} where the nodes represent LEGO pieces and the edges indicate that they are connected in the final model. Given a hierarchical clustering of this connectivity graph, we generate a \textit{construction plan} represented by string diagrams which is hierarchical, compositional, and interpretable. Using the formalism of string diagrams, complex sub-assemblies can be \textit{black boxed} into larger string diagrams. Having this hierarchical structure allows us to manipulate or adapt our plan at a desired level of abstraction. Furthermore, there is a categorical formalism that enables schedules to be generated from these string diagrams. We use topological sorting, Girvan-Newman, and Leiden algorithms to generate assembly plans and schedules with different properties. Finally, we use Minecraft \cite{minecraftgame} as a simulator to validate the resulting schedules and measure their time-to-build performance.

In this paper, we demonstrate the versatility of this approach with a framework, \textsc{CompositionalPlanning}, that provides a new way of talking about assembly planning. Our category theoretic interpretation provides a flexible mathematical foundation that allows for an end-to-end demonstration from CAD design to assembly simulation. The contributions of this paper are the following:
\begin{itemize}
     \item we show how string diagrams are an intuitive yet mathematically sound language to represent an assembly planning domain.
     \item as large string diagrams can be tedious and cumbersome for humans to work with, our framework automates the creation of large string diagrams and thus eliminates the overhead traditionally associated with them.
     \item a novel algorithm that converts string diagrams to \textit{expressions} that result in highly parallel assembly plans 
    \item a Minecraft based simulation environment modification or ``mod'' to execute LEGO assembly plans.
     \item we publish the \textsc{CompositonalPlanning} framework as a Julia~\cite{Julia} package for others to reproduce and build upon our work\footnote{\textsc{CompositionalPlanning} - https://github.com/CompositionalPlanning/}.
\end{itemize}

%%%%%%%%%%%%%%%%%%%%%%%%%%%%%%%%%%%%%%%%%%%%%%%%%%%%%%%%%%%%%%%%%%%%
\section{Related Work}
%A comprehensive review on assembly planning problems and approaches can be found in~\cite{Ghandi}. In particular, the \textit{Assembly Sequence Planning} (ASP) problem that we target in this paper is an NP-hard problem. ASP's goal is to find a collission-free sequence of assembly operations that put together individual parts given the geometry of the final product and the relative positions of parts in the final product. ASP is considered a combinatorial problem and therefore representations of the space of possible sequences has been an active area of research~\cite{Jimenez}. To the best of our  knowledge, we are the first to study ASP under a category theoretic framework. In this paper, we show that category theory provides both an \textit{explicit} and an \textit{implicit} representation on assembly sequences. On the one hand, string diagrams provide an explicit ASP representation of partial and full assemblies. On the other hand, expressions provide encode precedence relationships and provide an implicit representation.

Assembly planning problems and approaches have been widely investigated. A comprehensive survey of them can be found in ~\cite{Ghandi}. In ~\cite{zha1998integrated}, a survey of  assembly design and planning systems is presented. In particular, the \textit{Assembly Sequence Planning} (ASP) problem that we target in this paper is an NP-hard problem. ASP's goal is to find a collision-free sequence of assembly operations that put together individual parts given the geometry of the final product and the relative positions of parts in the final product. ASP is considered a combinatorial problem and therefore representations of the space of possible sequences has been an active area of research~\cite{Jimenez}. While various representations ranging from AND-OR graphs ~\cite{de1990and} to Petri nets ~\cite{rosell2004assembly} have been proposed, we are the first to study ASP under a category theoretic framework. In this paper, we show that category theory provides both an \textit{explicit} and an \textit{implicit} representation on assembly sequences. On the one hand, string diagrams provide an explicit ASP representation of partial and full assemblies. On the other hand, expressions provide us mathematical soundness and rigor as they implicitly encode precedence assembly relationships.

% Response
Although this paper focuses on LEGO and Minecraft models, the \textsc{CompositionalPlanning} framework can be easily adapted to other domains. In 3D CAD modeling, for example, parts are composed into assemblies in a similar way to how LEGO pieces compose into LEGO models. The specification of the parts themselves and their composition of assemblies is defined in a CAD file that is analogous to a LEGO 3D CAD file. Therefore, the first step would be to create a custom parser for a specific CAD file format to extract a connectivity graph from the information about parts and their composition. Given that there are many CAD file formats in the market today~\cite{NISTCAD}, the scalability of \textsc{CompositionalPlanning} in the CAD domain is dependent on the availability of custom parsers.

% Response
Related work in automated assembly planning illustrates how this field is highly fragmented. Most researchers develop custom assembly planning solutions that work for specific products and processes. In~\cite{Boschert}, the authors present a system to address the assembly planning of multi-variant products in modular production systems. The product requirements and their feasible assembly orders are modeled in a directed graph referred to as the \textit{Augmented Assembly Priority Plan} (AAPP). The AAPP encodes how two initial subassemblies are joined together to form a new subassembly through a value adding task such as ``assembly'' or ``screwing''. From the authors' description, the AAPP can be mapped as a string diagram to leverage the planning capabilities of our \textsc{CompositionalPlanning} framework. Similarly, the authors in~\cite{Pintzos} present a system to generate \textit{assembly precedence graphs} from CAD files. Similar to our results, they show that an assembly precedence graph contains all the valid sequences of an assembly. Their assembly precedence graphs correspond to our connectivity graph, and their assembly sequences correspond to our plans and schedules. There are two important differences compared to our work: (a) their means of user-interaction and visualization are spreadsheets instead of graphs, and (b) their implementation is based on a proprietary CAD software.

%A CAD model along with a construction schedule is known as a 4D CAD model~\cite{4dcad}.

%%%%%%%%%%%%%%%%%%%%%%%%%%%%%%%%%%%%%%%%%%%%%%%%%%%%%%%%%%%%%%%%%%%%
\section{Categorical Assembly Planning Framework}
Our \textsc{CompositionalPlanning} framework, shown in Figure~\ref{fig:pipeline}, consists of five reusable software components. The first step is to infer a connectivity graph from the CAD model that describes how the different parts come together in terms of their geometry (e.g., orientation) and assembly operations (e.g., snap, glue, weld, insert). This step is necessary because most CAD models list quantities of all parts and detailed structural functions in relation to a finished product, and this information does not directly map to their assembly.

\vspace{-0.2in}
\begin{figure*}[!ht]
\center{\includegraphics[width=1\textwidth]{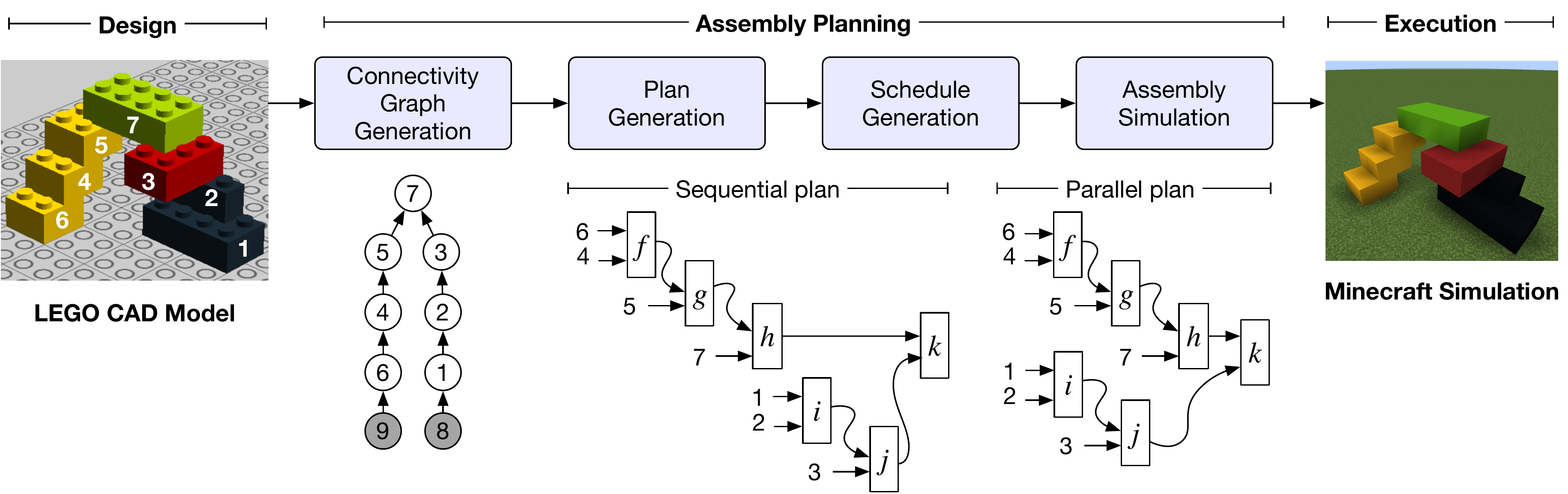}}
\caption{\label{fig:pipeline}\textsc{CompositionalPlanning} framework pipeline.}
\end{figure*}
\vspace{-0.2in}

The second step consists of plan generation~\cite{FoxKempf} -- described by different string diagrams -- that have different properties of order and parallelism. These plans, although expressed by different expressions of string diagrams, generate the same LEGO model as they: (i) bring an initial world to a goal world using a set of assembly operators, and (ii) minimally impose ordering constraints.

The third step generates a schedule using a plan and a detailed knowledge of the execution environment (e.g., number of workers, machines). The job of the schedule generation~\cite{FoxKempf} is to impose further ordering constraints on the assembly operator application to achieve a robust (e.g., against failures) and time-efficient execution of the assembly task (e.g., time-to-assembly).

The fourth and final step consists of executing the schedule in a simulator to visualize the assembly task, and to generate performance metrics. Although this paper focuses on LEGO CAD models as an input, and Minecraft simulations as an output, our components are domain agnostic, and they can be easily adapted for use in other domains.

\subsection{Connectivity Diagram Generation from CAD}
 \textsc{CompositionalPlanning} parses the text-based LDraw files~\cite{ldraw} and automatically builds the connectivity diagram. Every LEGO model has a unique connectivity diagram that describes how the pieces are connected to each other. LDraw files~\cite{ldraw} describe all the bricks in the model by type (e.g., 2$\times$2, 2$\times$4), color, center coordinates ($x$, $y$, $z$), and a 3$\times$3 rotation matrix. This LDraw file represents the bill of materials (BOM) of the LEGO model and does not contain any information about the connectivity of the bricks. Therefore, the first step is to parse this information from the LDraw file $f$ and generate a list of LegoObjects. The second step  is to create a directed graph and add a node for every object in the LegoObjects list. Note that these nodes are not yet connected by edges.

%\begin{algorithm}
%    \caption{Connectivity Diagram Generation Algorithm}
%    \label{alg:cd}
%    \begin{algorithmic}[1] % The number tells where the line numbering should start
%        \Procedure{GenerateConnectivityDiagram}{$f$} \Comment{The LDraw file $f$}
%            \State LegoObjects $\gets$ Parse color, x, y, z, rotation matrix, brick info from $f$
%            \State CD $\gets$ DirectedGraph(LegoObjects) \Comment{Adds nodes from LegoObjects}
%            \ForEach {Pair $a, b$ of objects $\in$ LegoObjects}
%                \If {$a$ is vertically adjacent to $b$ \AndAlg~their bounding boxes overlap}
%                    \State CD.addEdge($a$, $b$) \Comment{$a$ points to $b$}
%                \EndIf
%            \EndFor
%            \State Ground every node $n \in$ CD that does not have predecessors
%            \State Write CD graph to a GraphML file
%        \EndProcedure
%    \end{algorithmic}
%\end{algorithm}

Vertical stacking is the most common operation with LEGO bricks as shown in the example in Figure~\ref{fig:pipeline}. Therefore, the third step in our framework is to infer the vertical connectivity in the LEGO CAD model. In LDraw's coordinate system $-y$ is ``up''. Therefore, two bricks are connected if: (a) the top face of one brick has the same $y$ coordinate as the other brick's down face ($ a.top\_ycoord() == b.bottom\_ycoord()$); (b) and their boxes (defined by $(x, z)$ center coordinates and the brick's $width$ and $length$) intersect ($(abs(a.x - b.x) * 2 < (a.length + b.length))~and~(abs(a.z - b.z) * 2 < (a.width + b.width))$). For every connected pair of objects we create an edge from $a$ to $b$ in the connectivity diagram. Other less common operations such as horizontal stacking, and operations involving other LEGO pieces such as pegs are left for future work. However, the connectivity inference would follow a similar principle as the one described above.

The fourth step consists of grounding all nodes in the connectivity diagram that do not have any predecessors. Having explicit ground nodes helps the processing of the connectivity diagram by the following algorithms. As an illustrative example consider the LEGO model and its connectivity diagram shown in Figure~\ref{fig:pipeline}. It consists of seven bricks sequentially numbered from $\circled{1}, ..., \circled{7}$. In addition, our algorithm also includes the ``ground'' nodes to facilitate the model construction using the base build plate. In this example, the ground nodes $\groundedcircled{9}$ and $\groundedcircled{8}$ connected to $\circled{6}$ and $\circled{1}$, respectively.

Extending our framework beyond LEGO would require new parsers to read CAD file formats, and new inference algorithms to derive the connectivity between parts.

\subsection{String Diagrams}
String diagrams are diagrams where resources are represented by strings (wires) and processes are represented by boxes. For example a process which snaps a peg into a hole is represented by
\begin{center}
\begin{tikzpicture}[remember picture,font={\fontsize{9}{14.4}},container/.style={inner sep=0},every path/.style={solid, line width=0.4pt},decoration={markings, mark=at position 0.5 with {\arrow{Stealth}}},execute at begin node=$,execute at end node=$]
  \node[container] (n) {
    \begin{tikzpicture}
      \node[minimum height=4.5em] (n1) {};
      \node[draw,solid,inner sep=0.333em,rectangle,rounded corners,minimum height=4.5em,right=2em of n1)] (n2) {\mathrm{snap}};
      \draw[postaction={decorate}] ($(n1.center)+(0,1.25em)$) to[out=0,in=180] node[above=0.25em,midway] {\mathrm{L1}} ($(n2.west)+(0,1.25em)$);
      \draw[postaction={decorate}] ($(n1.center)+(0,-1.25em)$) to[out=0,in=180] node[above=0.25em,midway] {\mathrm{L2}} ($(n2.west)+(0,-1.25em)$);
      \node[minimum height=2.0em,right=2em of n2)] (n3) {};
      \draw[postaction={decorate}] (n2.east) to[out=0,in=180] node[above=0.25em,midway] {\mathrm{base}} (n3.center);
    \end{tikzpicture}
  };
\end{tikzpicture}
\end{center}
String diagrams can be composed in sequence
\begin{center}
    \begin{tikzpicture}[remember picture,font={\fontsize{9}{14.4}},container/.style={inner sep=0},every path/.style={solid, line width=0.4pt},decoration={markings, mark=at position 0.5 with {\arrow{Stealth}}},execute at begin node=$,execute at end node=$]
  \node[container] (n) {
    \begin{tikzpicture}
      \node[minimum height=7.0em] (n1) {};
      \node[container,right=2em of n1)] (n2) {
        \begin{tikzpicture}
          \node[draw,solid,inner sep=0.333em,rectangle,rounded corners,minimum height=4.5em] (n21) {\mathrm{snap}};
          \node[minimum height=2.0em,below=0.5em of n21] (n22) {};
        \end{tikzpicture}
      };
      \draw[postaction={decorate}] ($(n1.center)+(0,2.5em)$) to[out=0,in=180] node[above=0.25em,midway] {L1} ($(n21.west)+(0,1.25em)$);
      \draw[postaction={decorate}] ($(n1.center)+(0,0.0em)$) to[out=0,in=180] node[above=0.25em,midway] {L2} ($(n21.west)+(0,-1.25em)$);
      \draw[postaction={decorate}] ($(n1.center)+(0,-2.5em)$) to[out=0,in=180] node[above=0.25em,midway] {L3} (n22.center);
      \node[draw,solid,inner sep=0.333em,rectangle,rounded corners,minimum height=4.5em,right=2em of n2)] (n3) {\mathrm{snap}};
      \draw[postaction={decorate}] (n21.east) to[out=0,in=180] node[above=0.25em,midway] {\mathrm{base}} ($(n3.west)+(0,1.25em)$);
      \draw[postaction={decorate}] (n22.center) to[out=0,in=180] node[above=0.25em,midway] {} ($(n3.west)+(0,-1.25em)$);
      \node[minimum height=2.0em,right=2em of n3)] (n4) {};
      \draw[postaction={decorate}] (n3.east) to[out=0,in=180] node[above=0.25em,midway] {C} (n4.center);
    \end{tikzpicture}
  };
\end{tikzpicture}
\end{center}
indicating that the processes must be performed in sequence. String diagrams can also be composed in parallel
\begin{center}
    \begin{tikzpicture}[remember picture,font={\fontsize{9}{14.4}},container/.style={inner sep=0},every path/.style={solid, line width=0.4pt},decoration={markings, mark=at position 0.5 with {\arrow{Stealth}}},execute at begin node=$,execute at end node=$]
  \node[container] (n) {
    \begin{tikzpicture}
      \node[minimum height=4.5em] (n1) {};
      \node[container,right=2em of n1)] (n2) {
        \begin{tikzpicture}
          \node[draw,solid,inner sep=0.333em,rectangle,rounded corners,minimum height=2.0em] (n21) {\mathrm{buildcolumn}};
          \node[draw,solid,inner sep=0.333em,rectangle,rounded corners,minimum height=2.0em,below=0.5em of n21] (n22) {\mathrm{buildroof}};
        \end{tikzpicture}
      };
      \draw[postaction={decorate}] ($(n1.center)+(0,1.25em)$) to[out=0,in=180] node[above=0.25em,midway] {A1} (n21.west);
      \draw[postaction={decorate}] ($(n1.center)+(0,-1.25em)$) to[out=0,in=180] node[above=0.25em,midway] {A2} (n22.west);
      \node[minimum height=4.5em,right=2em of n2)] (n3) {};
      \draw[postaction={decorate}] (n21.east) to[out=0,in=180] node[above=0.25em,midway] {B1} ($(n3.center)+(0,1.25em)$);
      \draw[postaction={decorate}] (n22.east) to[out=0,in=180] node[above=0.25em,midway] {B2} ($(n3.center)+(0,-1.25em)$);
    \end{tikzpicture}
  };
\end{tikzpicture}
\end{center}
indicating that the order in which the tasks are performed does not matter. String diagrams also have algebraic expressions called morphisms. For example, the first example has a corresponding algebraic expression given by
\[ \mathrm{snap} \maps \mathrm{L1} \otimes \mathrm{L2} \to \mathrm{base} \]
Note that this expression is both functional and typed. This makes \textsc{CompositionalPlanning} an efficient and type-safe framework. In a similar way, the expressions which string diagrams represent can be composed in sequence and parallel using the two operations  \begin{equation}
(f \maps x \to y, g \maps y \to z) \mapsto f \cdot g \maps x \to z
\end{equation}
\begin{equation} (f \maps x \to y, f' \maps x' \to y') \mapsto f \otimes f' \maps x \otimes x' \to y \otimes y' \end{equation}
Rather complicated expressions can be built using these operations, e.g., see the string diagrams in Figure \ref{fig:sequential_plans}. A natural question to ask is when two expressions correspond to the same string diagram. The answer to this question is essential to understanding how planning domain dependencies represented in string diagrams can be algebraically manipulated. It turns out that if the algebraic expressions satisfy the right set of axioms, then the way that a string diagram is drawn is independent of the expression it generates. A structure of algebraic expressions satisfying these axioms is a well-known structure in category theory called a symmetric monoidal category. In~\cite{jstreet} it is shown that string diagrams unambiguously represent morphisms in a given symmetric monoidal category. The following theorem ensures the soundness of string diagrams in the LEGO assembly domain.

\begin{theorem}
Let $G=(E,V)$ be a simple graph whose nodes $V$ represent pieces of a LEGO model and whose edges $E$ indicate a connection in the completed structure. Then, there is a symmetric monoidal category where:
\begin{itemize}
    \item an object is finite tensor product $X_1 \otimes X_2 \ldots X_n$ of subsets of $V$ i.e. all possible tensors of subassemblies.
    \item A morphism $f \colon X_1 \otimes X_2 \ldots \otimes X_n \to Y_1 \otimes Y_2 \ldots Y_n$ is a construction plan which turns the subassemblies $X_1 \otimes X_2  \ldots \otimes X_n $  into the subassemblies $Y_1 \otimes Y_2 \ldots \otimes Y_n$ using only the joins allowed by the edges of $G$.
    \item The composite $g \cdot f$ represents the construction plan where $f$ and $g$ are performed in sequence.
    \item The tensor product $g \otimes f$ represents the construction plan where $f$ and $g$ are performed in parallel.
\end{itemize}
\end{theorem}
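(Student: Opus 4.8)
The plan is to realize the claimed structure as a \emph{free symmetric monoidal category} built from data read off the graph $G$, and then to invoke the coherence theorem of Joyal and Street \cite{jstreet} to identify its morphisms with string diagrams, hence with construction plans. In outline: (i) extract a monoidal signature $\Sigma_G$ from $G$ whose generating objects are subassemblies and whose generators are the physically legal ``elementary joins''; (ii) take $\mathcal{C}_G := F(\Sigma_G)$, the free symmetric monoidal category on $\Sigma_G$; (iii) read off the four bulleted properties from the standard description of a free SMC; (iv) match its hom-sets with the informal notion of construction plan.

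For step (i), the generating objects of $\Sigma_G$ are the subsets $X \subseteq V$, regarded as subassemblies all of whose internal connections have already been made. For each ordered pair of \emph{disjoint} subsets $X, Y \subseteq V$ that are \emph{connected in $G$} — i.e.\ there is at least one edge $\{x,y\} \in E$ with $x \in X$ and $y \in Y$ — we include one generating morphism
\[ j_{X,Y} \maps X \otimes Y \to X \cup Y, \]
the elementary join of $X$ onto $Y$. These are the only generators; in particular, every generator strictly enlarges the underlying set when $X,Y \neq \emptyset$, a fact used at the end. For step (ii)–(iii), by the usual presentation of the free SMC on a signature, an object of $\mathcal{C}_G$ is a finite list $X_1 \otimes \cdots \otimes X_n$ of generating objects with $\otimes$ given by concatenation and unit the empty list, which is precisely the first bullet; a morphism is an equivalence class of formal composites of generators $j_{X,Y}$, identities, and symmetries under $\cdot$ and $\otimes$, modulo the symmetric monoidal axioms. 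Composition stacks plans end to end and the tensor juxtaposes them, giving the third and fourth bullets directly, and by \cite{jstreet} these equivalence classes correspond bijectively to isotopy classes of string diagrams over $\Sigma_G$, so the algebraic and diagrammatic presentations of a plan agree.

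For step (iv), the second bullet follows by unwinding both sides: every generator $j_{X,Y}$ is a legal elementary join, and $\cdot$ and $\otimes$ are ``do one sub-plan after another'' and ``do sub-plans to disjoint parts,'' so every morphism of $\mathcal{C}_G$ \emph{is} a construction plan using only joins allowed by $G$; conversely any construction plan is, after re-ordering independent steps and re-indexing wires, an iterated $\cdot$/$\otimes$ composite of elementary joins, hence a morphism, and two plans are identified exactly when they differ by such bookkeeping, i.e.\ by the SMC equations. Thus $\mathrm{Hom}_{\mathcal{C}_G}(X_1 \otimes \cdots \otimes X_n, Y_1 \otimes \cdots \otimes Y_m)$ is the set of construction plans turning the one tuple of subassemblies into the other.

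The main obstacle is not the category-theoretic machinery — existence of the free SMC on a signature is standard, and coherence is quoted from \cite{jstreet} — but pinning down $\Sigma_G$ so that its hom-sets coincide with the intended physical notion of a construction plan. One must decide whether an elementary join requires disjointness of the two subassemblies (yes: a piece is placed once), whether a single step may close several edges of $G$ simultaneously or only one, and, crucially, one must check that no spurious ``disassembly'' morphisms appear. This last point is the load-bearing one, and it holds because each generator satisfies $X, Y \subsetneq X \cup Y$ whenever $X, Y \neq \emptyset$, so along any composite the collection of placed pieces only grows and already-made joins are never undone; hence $\mathcal{C}_G$ models assembly faithfully and nothing more. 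Different reasonable conventions above yield slightly different categories, all symmetric monoidal by the identical argument; we fix the ``disjoint, at least one connecting edge'' convention.
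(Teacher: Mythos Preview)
Your construction is correct but takes a genuinely different route from the paper. The paper does not build the free symmetric monoidal category on a signature directly; instead it encodes $G$ as a Petri net, with places $\mathcal{P}(V)$ and transitions $\mathcal{P}(E)$ (a subset of edges being sent by the source map to the pair $\{x_1,\dots,x_n\}+\{y_1,\dots,y_n\}$ and by the target map to their union), and then quotes Sassone's theorem that every Petri net $P$ yields a strict symmetric monoidal category $\mathcal{Q}[P]$ of strongly concatenable processes. Your approach is more elementary and self-contained: it avoids the Petri-net detour, makes the generating joins explicit, and foregrounds the design issues (disjointness, monotonicity, no disassembly) that the paper's proof leaves implicit in the choice of $s$ and $t$. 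The paper's approach, on the other hand, buys a ready-made operational semantics---morphisms are literally Sassone's processes, so the identification with ``construction plans'' comes packaged with an established concurrency-theoretic interpretation rather than having to be argued informally as in your step (iv). Note also that the two constructions are not literally the same category: the paper indexes generators by subsets of edges (so several distinct transitions can have the same input/output subassemblies, and the two input sets need not be disjoint), whereas you index by pairs $(X,Y)$ of disjoint connected subsets; both satisfy the theorem as stated, but it is worth flagging that your convention is one of several and differs from the paper's.
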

\begin{proof}
Symmetric monoidal categories can be freely generated from the data of a Petri net. In ``On the Category of Petri Net Computations", Sassone showed that for a Petri net $P$, there is a strict symmetric monoidal category $\mathcal{Q}[P]$ whose objects are finite strings of places in your Petri net and whose morphisms correspond to \emph{strongly concatenable processes} \cite{SassoneStrong}. These are sequences of events which can occur using the transitions of your Petri net in sequence and in parallel. 

Recall that a Petri net is a tuple $(T,P,s,t)$ where
\begin{itemize}
    \item $T$ is a finite set of events which can occur,
    \item $P$ is a finite set of available resources, 
    \item $s \maps T \to P^{\oplus}$ is a function from events to multisets of resources indicating which resources are required for each event and,
    \item $t \maps T \to P^{\oplus}$ is a function from events to multisets of resources indicating which resources are produced by each event.
\end{itemize}
To construct the desired symmetric monoidal category, we set $T$ equal to $\mathcal{P}(E)$, the set of subets of edges in $G$ and set $P$ equal to $\mathcal{P}(V)$ the set of subsets of nodes in $G$ i.e. all possible sub-assemblies of the LEGO model. Define $s \maps \mathcal{P} (E) \to \mathcal{P}(V)^{\oplus}$
by the rule 
\[\{(x_1,y_1), (x_2,y_2), \ldots (x_n,y_n) \} \mapsto \{x_1,x_2,\ldots,x_n\} + \{y_1,y_2,\ldots,y_n\}. \]
where $+$ indicates the occurrence of both subsets in the multiset $\mathcal{P}(V)^{\oplus}$. Define $t \maps \mathcal{P} (E) \to \mathcal{P}(V)^{\oplus}$ by the rule
\[\{(x_1,y_1), (x_2,y_2), \ldots (x_n,y_n) \} \mapsto \{x_1,x_2,\ldots,x_n\} \cup \{y_1,y_2,\ldots,y_n\}. \]
The symmetric monoidal category in the theorem statement is obtained by taking the category of strongly concatenable processes on this Petri net.
\end{proof}

The next section describes how the string diagrams of these symmetric monoidal categories can be leveraged to produce construction plans.

\subsection{Planning}\label{sec:plans}

A LEGO CAD model and its connectivity graph describe an object in its final, assembled state, but not how to assemble it. A \textit{plan} consists of step-by-step instructions on how to assemble the atomic parts (LEGO bricks) into the desired object. In general, there are many possible plans for assembling the same object, corresponding to different ways of forming intermediate sub-assemblies.

For us, plans are string diagrams. In such a diagram, the strings represent sub-assemblies and the boxes represent operations of joining together sub-assemblies to make a larger sub-assembly. Formally, a sub-assembly is a subset of the atomic parts, interpreted as being assembled. Each join operation takes two disjoint sub-assemblies $A$ and $B$ as inputs and produces a single output, the union $A \cup B$. A plan is a string diagram that takes all the singleton sets (sub-assemblies consisting of a single part) as inputs and produces as output the set of all parts (full assembly). Although every plan is valid at this level of description, not every plan will be physically feasible.

As proof of concept, we implemented two simple algorithms for assembly planning. In the \textit{sequential algorithm}, we topologically order the edges of the connectivity graph. That is, we first topologically order the nodes, where a topological ordering is any total ordering consistent with the directed edges. Then we lexicographically order the edges, viewed as ordered pairs of nodes. For each edge, taken in this order, we join the two sub-assemblies containing the source and target, if they are distinct; otherwise, we do nothing. We continue in this way until all the edges have been exhausted, at which point the object is fully assembled. When the connectivity graph is a path graph, the sequential plan is the obvious plan that joins the parts together one-at-a-time.

In the \textit{parallel algorithm}, we create more opportunities for parallelism by partitioning the connectivity graph into components, making plans on each component, and then treating these plans as black boxes in a higher-level plan. This meta-algorithm has several knobs to tune, and it can be applied recursively. To partition the graph, we can apply any community detection algorithm that finds non-overlapping communities. In our experiments, we use the Girvan-Newman algorithm \cite{girvan2002} and a variant of the Louvain method \cite{blondel2008}, the Leiden algorithm \cite{traag2019}. We perform only one level of partitioning and we do sequential planning within each partition and also to assemble the resulting sub-plans. The use of community detection to find opportunities for parallelism is a heuristic, but works well in our experiments.

\subsection{Scheduling}

A plan, in the form of a string diagram, says what steps to perform and how the steps depend on each other. A \textit{schedule} extends the information in a plan by assigning the steps a definite order; formally, a schedule is any linear extension of the topological ordering of the operations (boxes) in the plan. For simplicity, we take a resource-agnostic view of scheduling, in which the number of workers is unknown at the time of planning and scheduling. The aim in scheduling is therefore to maximize the opportunities for parallelism, given the constraints imposed by the plan \cite{rosenberg2016}.

Our scheduling algorithms have two major phases. First, we create a syntactic expression representing the plan. In general, a single string diagram can be represented by many different expressions; we construct one of them. We then linearize the expression, using a simple recursive algorithm, to obtain a schedule.

A small example will illustrate the relationship between string diagrams and syntactic expressions. Consider the string diagram shown in Figure~\ref{fig:fghk}, the composite of $f$ and $g$ in parallel with composite of $h$ and $k$. We can represent this diagram by either of the expressions $(f \cdot g) \otimes (h \cdot k)$ (read ``$f$ then $g$, and $h$ then $k$'') and $(f \otimes h) \cdot (g \otimes k)$ (read ``$f$ and $h$, then $g$ and $k$'').

\vspace{-0.2in}
\begin{figure}[!htb]
\center{\includegraphics[width=0.75\textwidth]{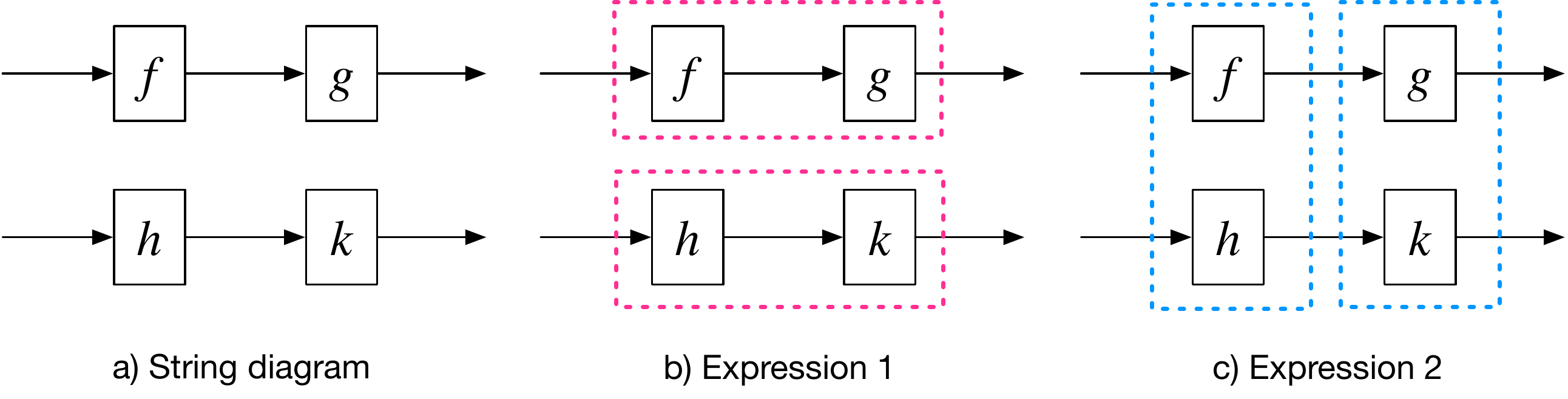}
\caption{\label{fig:fghk} Relationship between string diagrams and syntactic expressions}}
\end{figure}
\vspace{-0.2in}

We have developed an algorithm to find an expression for any string diagram representing a morphism in a symmetric monoidal category. As the algorithm is fairly elaborate, we will not digress to present it carefully, except to say that it is inspired by existing algorithms that recognize in a DAG, or reduce a DAG to, a series-parallel digraph \cite{valdes1982,mitchell2004}.

The second phase of scheduling is more straightforward. Having formed an expression for the plan, we schedule the plan by recursively linearizing the expression tree. Given a composition $f \cdot g$ , we simply concatenate the schedules for $f$ and $g$. Given a product $f \otimes g$, we \textit{interleave} the schedules for $f$ and $g$, meaning that we take the first element of the $f$-schedule, then the first element of the $g$-schedule, then the second element of the $f$-schedule, and so on, until the both schedules have been exhausted. For example, both of the above expressions $(f \cdot g) \otimes (h \cdot k)$ and $(f \otimes h) \cdot (g \otimes k)$ yield the same schedule $(f,h,g,k)$. Note that the ordering of the monoidal products affects the schedule,  so that $f \otimes g$ yields a different schedule than $g \otimes f$. This procedure can be seen as a special case of an existing algorithm for optimally scheduling series-parallel digraphs \cite{cordasco2014}.

%%%%%%%%%%%%%%%%%%%%%%%%%%%%%%%%%%%%%%%%%%%%%%%%%%%%%%%%%%%%%%%%%%%%
\subsection{Simulation}

Minecraft is an immensely popular 3D open-world video game where players can build their own structures~\cite{Duncan}. The game world and most of its elements are made of different kinds of blocks. These blocks can be used to create structures of any complexity. This versatility makes Minecraft a good fit to represent the CAD model and to simulate their assembly process. It has already proven to be a well-suited simulation tool in other robotics domains~\cite{aluru2015minecraft}. To execute the schedule generated by our framework we extended Minecraft by a new mod. With this ``mod'' we can simulate the whole assembly process of the CAD model described by the schedule. The simulation not only provides us with a comprehensible visual representation of the process, but also allows us to quantify execution time and worker occupancy of different schedules.

Our open source Minecraft ``mod'' executes the assembly operations in the correct order as dictated by the generated schedule. The geometric CAD information is encoded in the connectivity diagram and it is parsed by our mod. Each LEGO brick is represented by a single or multiple Minecraft blocks. The schedule and the operations specified in it determine which and how many bricks can be connected to each other per step. In addition to the precedence constraints encoded in the schedule, we can define a number of \textit{workers}. Each worker is allowed to perform a single operation per step. Operations are dispatched to workers in the order they appear in the schedule. Only operations for which a worker is available can be executed. Hence, the level of parallelism of the schedule and the number of workers determine the time it takes to complete the assembly. Disjoint sub-assemblies are assembled in their own area respectively until they are connected to each other forming a new (sub-)assembly. The assembly process is finished when all of the schedule’s operations have been completed by the available workers.

%%%%%%%%%%%%%%%%%%%%%%%%%%%%%%%%%%%%%%%%%%%%%%%%%%%%%%%%%%%%%%%%%%%%
\section{Results}
To validate the  \textsc{CompositionalPlanning}'s pipeline we designed the two LEGO CAD models shown in Figure~\ref{fig:cad_models}(a-b). The design objective was to have two LEGO models of around 100 bricks each with a rich set of features and a few human-intuitive sub-assemblies to validate our approach. The Columns model (Figure~\ref{fig:cad_models}(a)) is inspired by roman temples and consists of 77 bricks. This model consists of four column sub-assemblies, each composed of 12 vertically stacked 2$\times$2 bricks each. The roof sub-assembly consists of two pairs of support beams, each arranged in a stair configuration supporting a flat roof. The House model (Figure~\ref{fig:cad_models}(b)) consists of 86 bricks. The house foundation sub-assembly consists of eight 4$\times$10 green bricks and supports the house sub-assembly and an electric pole. The house sub-assembly consists of four non-identical walls. We designed each wall using different compositions of bricks (i.e., 1$\times$1, 1$\times$8, 1$\times$10, 1$\times$2$\times$2, etc.) that result in different connectivity features as highlighted by the different shades of brown. The front wall has a square window and a door. The two side walls have two windows each while the back wall is solid. The four walls support a two layer roof. The pole sub-assembly consists of eleven 1$\times$1 bricks, and one 1$\times$6 brick on top.

\begin{figure}[!htb]
\center{\includegraphics[width=0.65\textwidth]{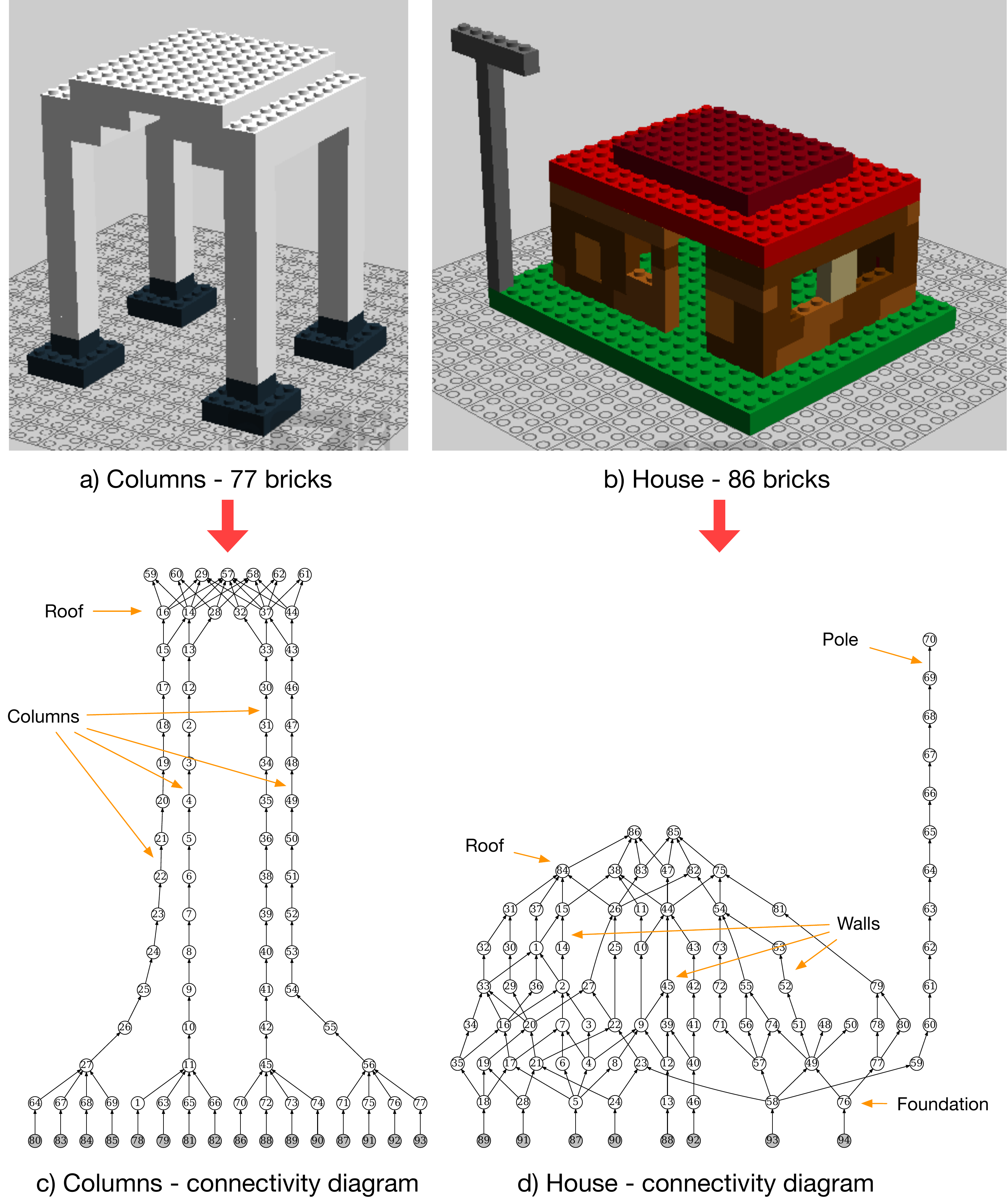}}
\caption{\label{fig:cad_models}LEGO CAD models and their inferred connectivity diagrams.}
\end{figure}

\subsection{Connectivity Diagrams}

The generated connectivity diagrams for the two CAD models are shown in Figure~\ref{fig:cad_models}(c-d). The Columns model was designed with regularity and symmetry in mind. These features are explicit in its connectivity diagram in Figure~\ref{fig:cad_models}(c) with the four long strands representing the columns, and the dense layer on top representing the roof. On the other hand, the House model was designed with asymmetry and irregularity in mind. These features are clearly visible in its connectivity diagram in Figure~\ref{fig:cad_models}(d). In the House connectivity diagram, the walls are irregular and asymmetric with gaps representing the windows and the door. The pole is represented by the long strand.

Typically, LEGO models come with assembly instructions, or \textit{build instructions}~\cite{legomanuals}. These instructions are, most likely, made for human enjoyment and therefore must be intuitive. One natural way to organize these instructions is by sub-assemblies such that humans can relate to the structure they are constructing (e.g., a house). In some cases, these sub-assemblies are obvious. For example, the column and roof sub-assemblies in the Columns connectivity diagram (Fig.~\ref{fig:cad_models}(c)) are easily distinguishable and therefore can be decomposed into a reasonable build plan. However, there are other cases when these sub-assemblies are not obvious. For example, decomposing the interlinked wall sub-assembly in the House model (Fig.~\ref{fig:cad_models}(d)) into a reasonable plan is not trivial. For both examples, our plan generation pipeline on string diagrams can be used to generate sequential and highly parallel assembly plans.

% https://www.lego.com/en-us/service/buildinginstructions

\subsection{Plan Generation}
In this paper, the quality of an assembly plan is determined in terms of how many operations can be executed in parallel, rather than on maximizing human enjoyment. For each LEGO CAD model, we generate two schedules. A sequential schedule is generated by topologically sorting the connectivity diagram, and a parallel schedule is generated with the algorithms introduced in Section~\ref{sec:plans}. Figure~\ref{fig:sequential_plans} shows the sequential schedules generated for the Columns and the House models. 

The column symmetry in the Columns model allows the sequential schedule to expose some parallelism as shown in Figure~\ref{fig:sequential_plans}(a). If several workers are available, this natural parallelism can be exploited to reduce the time-to-build. The sequential schedule for the House model, as shown in Figure~\ref{fig:sequential_plans}(b), exposes very little parallelism due to the asymmetry and irregularity in the model. These two examples help us illustrate the inherent limitations of sequential schedules.

\vspace{-0.2in}
\begin{figure}[!h]
\center{\includegraphics[width=0.85\textwidth]{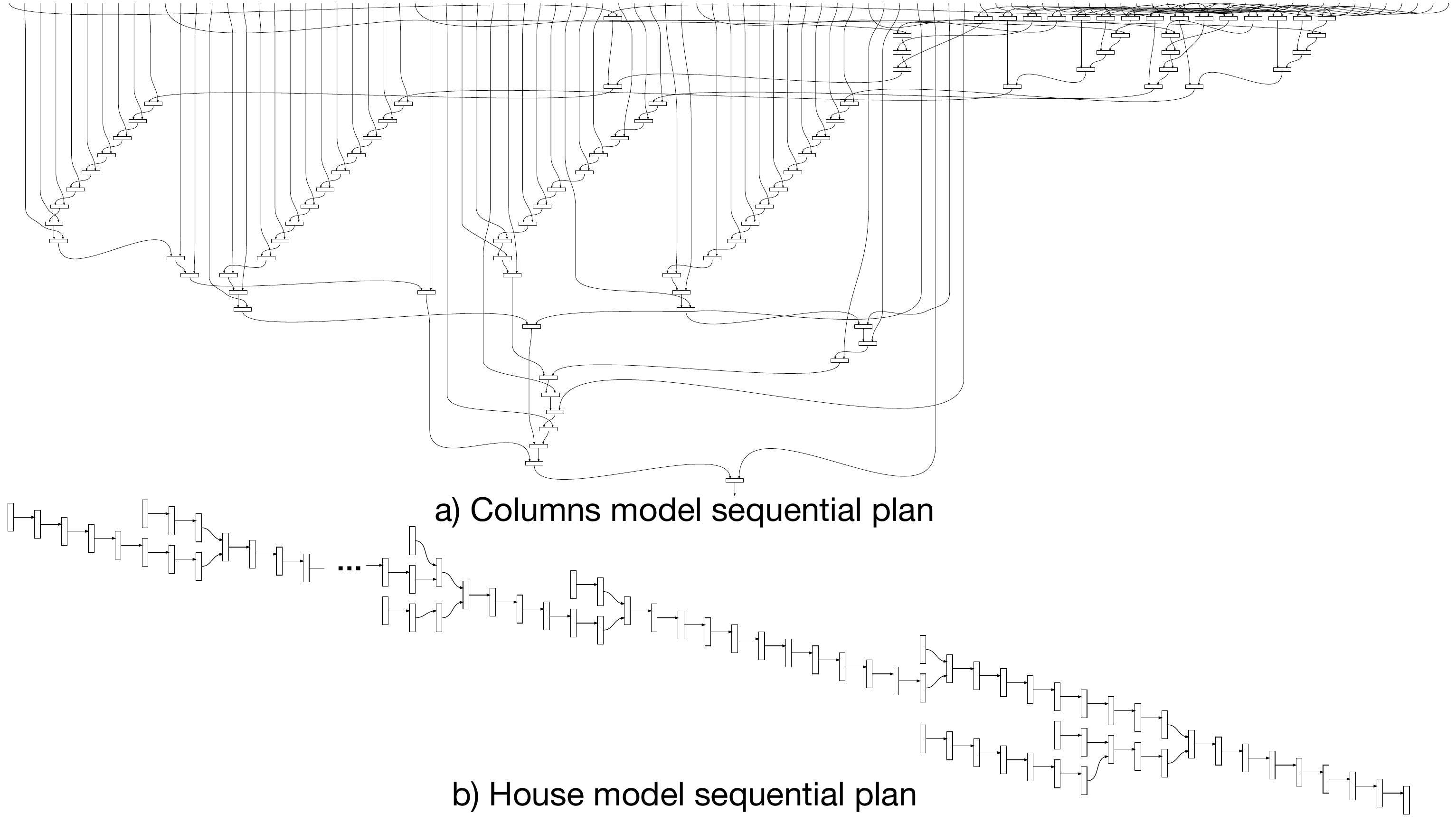}
\caption{\label{fig:sequential_plans}Sequential plans generated for the two LEGO CAD models.}}
\end{figure}
\vspace{-0.2in}

Using the parallel plan generation algorithms described in Section~\ref{sec:plans}, our framework exposes higher levels of parallelism as shown in Figure~\ref{fig:parallel_plans}.  Here, the black boxes corresponding to a partitioning of the connectivity diagram are shown by the labeled black boxes. The number in each black box represents the number of bricks within the black box sub-plan. The execution schedules are derived from these parallel plans. Even in the case of a purely sequential plan of the black boxes (represented by the width of the black boxes), it can be observed that the amount of parallelism is higher compared to the sequential schedules in Figure~\ref{fig:sequential_plans}. In particular, the House model's parallel plan in Figure~\ref{fig:parallel_plans}(b) exposes much higher parallelism when compared to the its sequential plan in Figure~\ref{fig:sequential_plans}(b) consisting of a stairs configuration with minimum parallelism.

\begin{figure}[!htb]
\center{\includegraphics[width=0.85\textwidth]{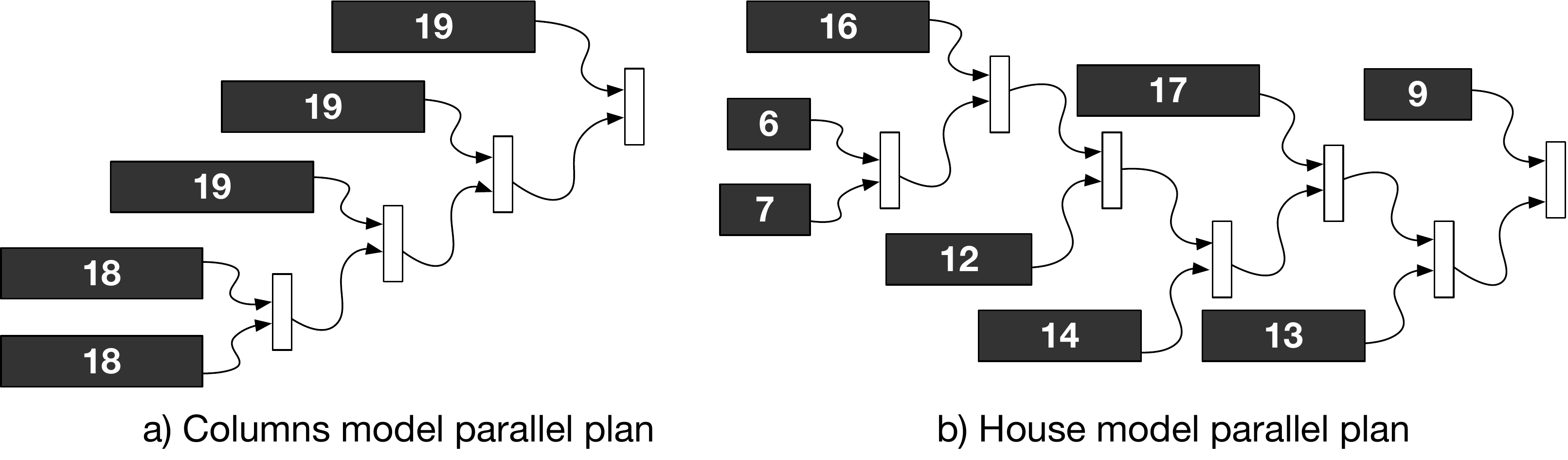}
\caption{\label{fig:parallel_plans}Parallel plans generated for the two LEGO CAD models. The width of the black boxes represent a sub-plan (i.e., a stairs configuration).}}
\end{figure}

\subsection{Simulated Schedule Execution}
The schedule and the number of workers determines how many operations can be executed in parallel. This parallelism can be visualized in the simulation when multiple sub-assemblies are constructed at the same time. Figure \ref{fig:assembly_process} shows a time-lapse of the parallel assembly process for the Columns and House models with an unlimited number of workers. The top row shows the assembly process at an early stage after a few blocks were already added. Note the parallel construction areas of the assemblies highlighted by the red squares. Furthermore, the main construction area where pieces and sub-assemblies will eventually be connected to each other can be recognized. Sub-assemblies that are connected to the ground are not constructed in separate construction areas but at their final position in the main construction area. This prevents excessive shifting of assemblies. While all the sub-assemblies are easily distinguishable for the Columns model, the House's main construction area already consists of three sub-assemblies which can be recognized by the three separate walls on top of the House's base. These construction areas are based on the black boxes of the plans and their representation in the corresponding schedule. The (sub-)schedule for a black box may contain some level of parallelism like the Columns model. Besides the parallel assembly of the columns the derived schedule allows for a parallel assembly of the roof in three separate construction areas. The middle row shows the half completed assemblies. The structures are more advanced, and some sub-assemblies have already been connected to other assemblies. The bottom row shows the fully assembled Columns and House LEGO models.

\begin{figure}[!h]
\center{\includegraphics[width=0.75\textwidth]{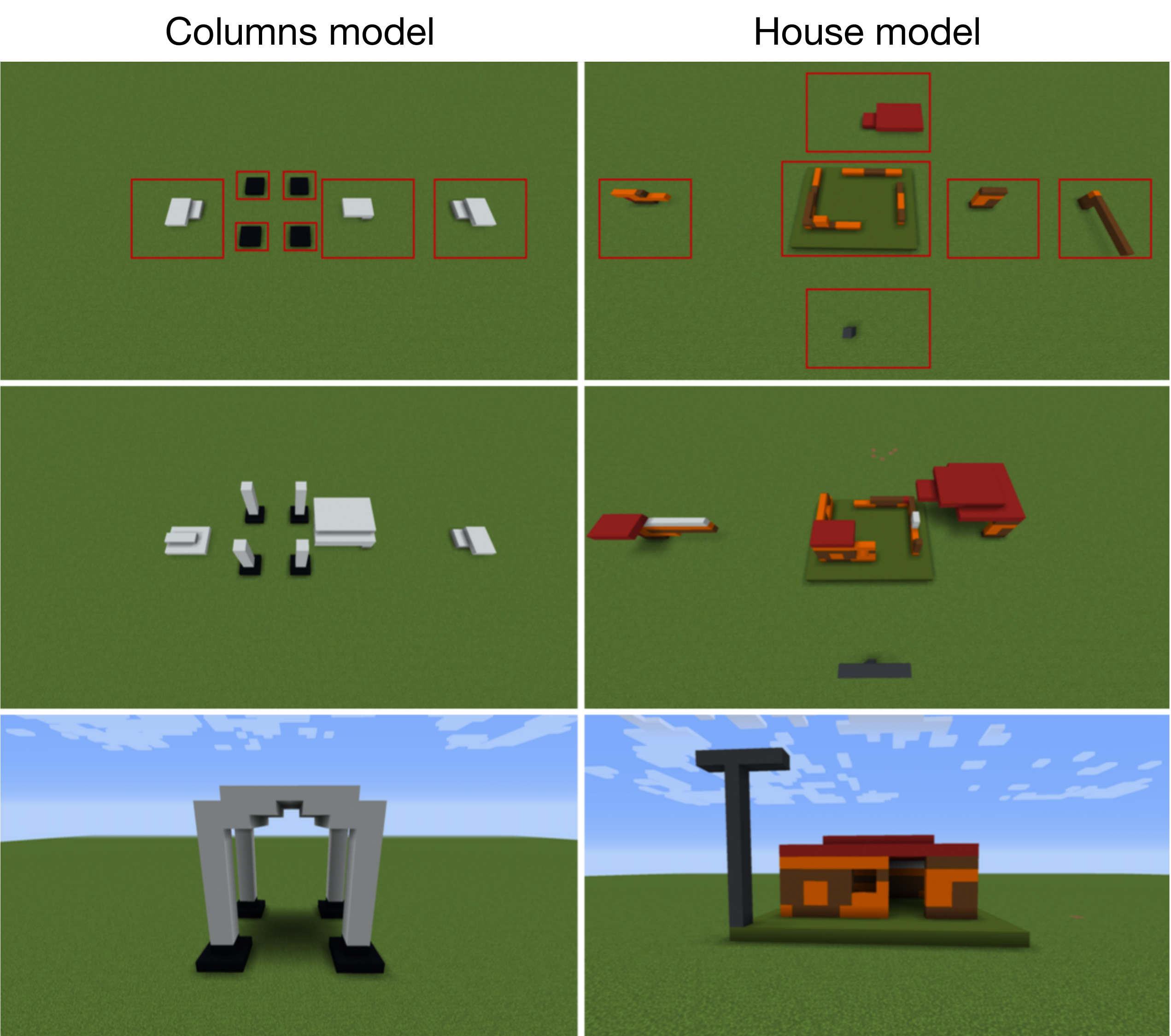}
\caption{\label{fig:assembly_process}Time-lapse of the execution of parallel plans. Black box sub-assemblies are built simultaneously on different construction areas highlighted by the red squares.}}
\end{figure}

To assess the quality of the generated plans we simulate the assembly process for the sequential and the parallel schedules for both models. The simulation is run with a varying number of workers – 1, 2, 4, 8 and 16. For each configuration we measured the total number of steps it takes to build the assembly and the average occupancy ratio of the workers. Table \ref{tab:resultData} summarizes the results. Our baselines are simulations run with a single worker. While pieces can be snapped to the ground or to an already existing assembly, two single pieces can also be combined to a new sub-assembly. In the latter case we must first connect a LEGO brick to the ground and this occupies a worker for a time step. For this reason, the time-to-built for the parallel schedule with a single worker is higher than the one for the sequential schedule.
Unsurprisingly, the parallel schedule with more than one worker always requires fewer steps to complete the assembly while maintaining a higher occupancy rate than its sequential counterpart. The performance difference between the sequential and parallel schedules for the Columns model is rather small because of its architecture. Its sequential plan (Figure~\ref{fig:sequential_plans}(a)) already exposes some parallelism. However, our parallel schedule is able to exploit additional opportunities and provides slightly better performance over the sequential schedule.

\begin{table}[htb]
\centering
\caption{Execution time and worker occupancy for different schedules with varying number of workers.}
\begin{tabular}{@{\extracolsep{4pt}}lp{1.2cm}llll}
\toprule
 {} & {} & \multicolumn{2}{l}{Columns model}& \multicolumn{2}{l}{House model} \\
 {Schedule} & {Workers} & Steps & Occupancy & Steps & Occupancy\\
\midrule
\multirow{5}{*}{\makecell[l]{Sequential}}
& 1 & 92 & 1.00 & 93 & 1.00 \\
& 2 & 50 & 0.92 & 75 & 0.62 \\
& 4 & 33 & 0.70 & 69 & 0.34 \\
& 8 & 25 & 0.46 & 65 & 0.18 \\
& 16 & 23 & 0.25 & 65 & 0.09\\ %\noalign{\vskip 1mm} %\cline{2-6} \noalign{\vskip 1mm} 
\midrule
\multirow{4}{*}{\makecell[l]{Parallel}}
& 1 & 95 & 1.00 & 98 & 1.00 \\
& 2 & 50 & 0.95 & 55 & 0.89 \\
& 4 & 30 & 0.79 & 36 & 0.68 \\
& 8 & 19 & 0.63 & 26 & 0.47 \\
& 16 & 17 & 0.35 & 21 & 0.29\\
\bottomrule
\end{tabular}

\label{tab:resultData}
\end{table}
%\vspace{-0.20in}

For the House model on the other hand, the parallel schedule yields significantly faster assembly times with an increasing number of workers; up to 3 times with 16 workers. Additionally, the workers are used to a higher capacity and the occupancy ratio deteriorates at a slower rate as the number of workers increases. This model shows that our algorithms are able to identify and exploit non-obvious parallelism in less regular and symmetric models.

%%%%%%%%%%%%%%%%%%%%%%%%%%%%%%%%%%%%%%%%%%%%%%%%%%%%%%%%%%%%%%%%%%%%
\section{Conclusion and Future Work}
In this paper, we studied the use of string diagrams for assembly planning and developed a framework to demonstrate this approach in the LEGO domain. This new perspective gives us multiple advantages. First, it provides us with a powerful graphical calculus for reasoning about the assembly planning domain in category theory. Second, this formalism allows string diagrams to be easily manipulated within a programming framework to generate plans and schedules. This allows us to seamlessly interconnect the different disciplines involved in assembly planning. Third, with a novel hierarchical planning approach using black boxes, we demonstrated that the resulting plans expose high-degrees of parallelism that result in efficient assembly. Our \textsc{CompositionalPlanning} framework has several limitations that prescribe future research. (i) As a proof of concept, we focused on the most popular LEGO assembly operations. Since our framework is domain agnostic, extending to other domains is an important direction for future work. (ii) We implemented three planning and one scheduling algorithm. Implementing other algorithms will help us validate the full potential of string diagrams for assembly planning. (iii) The planning algorithms yield different string diagrams. Developing new algorithms to \textit{morph} a string diagram to another with different properties (e.g., more parallelism) is a challenging but very interesting research direction. 

%
% ---- Bibliography ----
%
% BibTeX users should specify bibliography style 'splncs04'.
% References will then be sorted and formatted in the correct style.

\bibliographystyle{splncs04}
\bibliography{IEEEexample}

\end{document}